\documentclass[11pt]{article}

%

\usepackage{jmlr2e}

\usepackage[ruled,vlined]{algorithm2e}
\usepackage{amsmath, amssymb}
\usepackage{graphicx}
\usepackage{caption}
\usepackage{subcaption}
\usepackage{xcolor}
\usepackage{cancel}

\newcommand{\mb}{\mathbb}
\newcommand{\mc}{\mathcal}
\newcommand{\rar}{\rightarrow}

\newcommand{\bmat}[1]{\begin{bmatrix}#1\end{bmatrix}}
\newtheorem{assumption}{Assumption}
\newcommand{\spec}{\mathrm{spec}}
\newcommand{\diag}{\mathrm{diag}}

\newcommand{\Do}{\mathcal{D}_o}
\renewcommand{\Re}{\mathrm{Re}}
\newcommand{\SigmaK}[1]{\Sigma_{#1}}
\newcommand{\Ktempvar}[1]{\bar{K}_{#1}}

\newcommand{\Aclbar}{\bar{A}}
\newcommand{\vect}{\mathrm{vect}}
\newcommand{\nplayers}{N}
\newcommand{\Acl}[1]{A-\sum_{i=2}^\nplayers B_iK_i}
\newcommand{\nplayerstrategy}{K_1, \ldots, K_{\nplayers}}
\newcommand{\nplayerstrategyast}{K_1^\ast, \ldots, K_{\nplayers}^\ast}
\newcounter{smoothpropnum}
\newcounter{totalcntnum}

\usepackage{multirow}
 
 \usepackage{enumitem}

\usepackage{times}

\usepackage{amsmath, amssymb}
\usepackage{graphicx}
\usepackage{caption}
\usepackage{subcaption}
\usepackage{multirow}

\ShortHeadings{Policy-Gradient Algorithms in Linear Quadratic Games}{Mazumdar, Ratliff, Jordan, and Sastry}
\firstpageno{1}

\begin{document}

\title{Policy-Gradient Algorithms Have No Guarantees of Convergence in Linear Quadratic Games}
\author{\name Eric Mazumdar \email emazumdar@eecs.berkeley.edu \\ \addr Department of Electrical Engineering and Computer Science\\
       University of California, Berkeley, CA 
       \AND
       \name Lillian J. Ratliff \email ratliffl@uw.edu \\ \addr Department of Electrical and Computer Engineering\\
       University of Washington, Seattle, WA 
       \AND
       \name Michael I.\ Jordan \email jordan@cs.berkeley.edu \\ \addr Division of Computer Science and Department of Statistics\\
       University of California, Berkeley, CA 
       \AND
       \name S. Shankar Sastry \email sastry@coe.berkeley.edu \\ \addr Department of Electrical Engineering and Computer Science\\
       University of California, Berkeley, CA }

\editor{}

\maketitle

\begin{abstract}%

We show by counterexample that policy-gradient algorithms have no guarantees of even local
convergence to Nash equilibria in continuous action and state space multi-agent settings. To do so, we analyze gradient-play in $N$--player general-sum
linear quadratic games, a classic game setting which is recently emerging as a benchmark in the field of multi-agent learning.
In such games the state and action spaces are continuous and global Nash equilibria can be found be solving coupled Ricatti equations. Further, gradient-play in LQ games is equivalent to multi-agent policy-gradient. We first show that these games are surprisingly not convex games. Despite this, we are still able to show that the only critical points of the gradient dynamics are global Nash equilibria. We then give sufficient conditions under which policy-gradient will avoid the Nash equilibria, and generate a large number of general-sum linear quadratic games that satisfy these conditions. In such games we empirically observe the players converging to limit cycles for which the time average does not coincide with a Nash equilibrium. The existence of such games indicates that one of the most popular approaches to solving reinforcement learning problems in
the classic reinforcement learning setting has no local guarantee of convergence in multi-agent settings. Further, the ease with which we can generate these counterexamples suggests that such situations are not mere edge cases and are in fact quite common.

\end{abstract}

\section{Introduction}
Interest in multi-agent reinforcement learning  has seen a recent surge of late, and policy-gradient algorithms are championed due to their potential scalability. Indeed, recent impressive successes of multi-agent reinforcement learning have made use of policy optimization algorithms such as multi-agent actor-critic \citep{openAI_actcritic,deepmind_MAActorCritic,deepmind_CaptureFlag}, multi-agent proximal policy optimization \citep{openAI_MA_policygrad}, and even simple multi-agent policy-gradients \citep{deepmindMA_policygrad} in problems where the various agents have high-dimensional continuous state and action spaces like StarCraft II \citep{AlphaStar}.

Despite these successes, a theoretical understanding of these algorithms in multi-agent settings is still lacking. Missing perhaps, is a tractable yet sufficiently complex setting in which to study these algorithms. Recently, there has been much interest in analyzing the convergence and sample complexity of policy-gradient algorithms in the classic linear quadratic regulator (LQR) problem from optimal control \citep{KalmanLQR}. The LQR problem is a particularly apt setting to study the properties of reinforcement learning algorithms due to the existence of an optimal policy which is a linear function of the state and which can be found by solving a Ricatti equation. Indeed, the relative simplicity of the problem has allowed for new insights into the behavior of reinforcement learning algorithms in continuous action and state spaces \citep{LQRRecht,FazelLQR,wainwrightLQR}. 

An extension of the LQR problem to the setting with multiple agents, known as a \emph{linear quadratic (LQ) game}, has also been well studied in the literature on dynamic games and optimal control \citep{BasarOlsder}. As the name suggests, an LQ game is a setting in which multiple agents attempt to optimally control a shared linear dynamical system subject to quadratic costs. Since the players have their own costs, the notion of `optimality' in such games is a Nash equilibrium properties of which have been well analyzed in the literature \cite{scalarNash,allNash,Basar2,lukes}. 

Like LQR for the classical single-agent setting, LQ games are an appealing setting in which to analyze the behavior of multi-agent reinforcement learning algorithms in continuous action and state spaces since they admit global Nash equilibria in the space of linear feedback policies.  Moreover, these equilibria can be found by solving a coupled set of Ricatti equations. 
As such, LQ games are a natural benchmark problem on which to test policy-gradient algorithms in multi-agent settings. Furthermore, policy gradient methods open up the possibility to new scalable approaches to finding solutions to control problems even with constraints.  In the single-agent setting, it was recently shown that policy-gradient has global convergence guarantees for the LQR problem~\citep{FazelLQR}. These results have recently been extended to projected policy-gradient algorithms in zero-sum LQ games~\citep{basarLQ}.

\paragraph{Contributions.}  We present a \emph{negative} result, showing that policy-gradient in general-sum LQ games does not enjoy \emph{even local} convergence guarantees, unlike in  LQR and zero-sum LQ games. In particular, we show that, if each player randomly initializes their policy and then uses a policy-gradient algorithm, there exists an LQ game in which the players would almost surely avoid a Nash equilibrium. Further, our numerical experiments indicate that LQ games in which this occurs may be quite common. We also observe empirically that when players fail to converge to the Nash equilibrium they do converge to stable limit cycles. These cycles do not seem to have any readily apparent relationship to the Nash equilibria of the game. 

We note that non-convergence to Nash equilibria is not in itself a new phenomenon (see e.g. \cite{paper:zsg,daskalakisGANS,predictionlearning})  and that the existence of cycles in the dynamics of learning dynamics in games has also been repeatedly observed in various contexts \cite{paper:old,mertCycle,PapaGameDyn}.  However, we believe that such phenomena have not yet been shown to occur in the dynamics of multi-agent reinforcement learning algorithms in continuous action and state spaces. Since such algorithms have had such striking successes in recent years, we believe a theoretical understanding of their behaviors can lay the groundwork for the development of more efficient and theoretically sound  multi-agent learning algorithms.

\paragraph{Organization.}   Section~\ref{sec:prelims} introduces  $\nplayers$-player general-sum LQ games and presents previous results on the existence of the Nash equilibrium in such games. In Section~\ref{sec:thm}, we show that these games are \emph{not} convex games and that all the stationary points of the joint policy-gradient dynamics are Nash equilibria. Following this, we give sufficient conditions under which policy-gradient almost surely avoids a Nash equilibrium in Section~\ref{sec:nonconv}. Given these theoretical results, in Section~\ref{sec:num} we present empirical results demonstrating that a large number of 2-player LQ games satisfy these sufficient conditions.  Numerical experiments showing the existence of limit cycles in the gradient dynamics of general-sum LQ games are also presented. The paper is concluded with a discussion in Section~\ref{discussion}.

\section{Preliminaries}
\label{sec:prelims}



We consider $\nplayers$-player LQ games subject to a discrete-time dynamical
system defined by
\begin{align}
\textstyle z(t+1)=Az(t)+\sum_{i=1}^\nplayers B_iu_i(t) \ \; \ \  z(0)=z_0 \sim \Do,
\label{eq:updateLQR}
\end{align}
where $z(t) \in \mb{R}^m$ is the state at time $t$, $\Do$ is the initial
state distribution, and $u_i(t)\in\mb{R}^{d_i}$ is the control input of player $i \in {1,\ldots,\nplayers}$.  
For LQ games, it is known that under reasonable assumptions, linear feedback
policies for each player that constitute a Nash equilibrium  exist and are
unique if a set of coupled Ricatti equations admit a unique solution \citep{BasarOlsder}.
Thus, we consider that each player $i$ searches for a linear feedback policy of the form $u_i(t)=-K_iz(t)$
that minimizes their loss, where $K_i \in \mb{R}^{d_i\times m}$.  We use the notation $d=\sum_{i=1}^\nplayers d_i$ for the combined dimension of the players' parameterized policies.

As the name of the game implies, the players' loss functions are quadratic
functions given by
\[\textstyle f_i(u_1,\ldots,u_\nplayers)=\mb{E}_{z_0 \sim \Do} \left[\sum_{t=0}^\infty z(t)^TQ_iz(t)+u_{i}(t)^TR_iu_{i}(t)\right],\] 
where $Q_i$ and $R_i$ are the cost matrices for the state and
input, respectively. 
\begin{assumption}
 For each player $i\in\{1, \ldots, \nplayers\}$, the state and control cost matrices satisfy $Q_i \succ 0$ and $R_i \succ 0$.
\label{ass:coststatematrix}
\end{assumption}

We note that the players are coupled
through the dynamics since $z(t)$ is constrained to obey the update equation
given in~\eqref{eq:updateLQR}. 
We focus on a setting in which all players randomly initialize their strategy and then perform gradient descent simultaneously on their own cost functions with respect to their individual control inputs. That is, the players use policy-gradient algorithms of the following form:
\begin{align}
    K_{i,n+1}&=K_{i,n}-\gamma_i D_if_i(K_{1,n},\ldots,K_{\nplayers,n})
\end{align}
where $D_if_i(\cdot,\cdot)$ denotes the derivatives of $f_i$ with respect to the $i$--th argument, and $\{\gamma_i\}_{i=1}^\nplayers$ are the step-sizes of the players. We note that there is a slight abuse of notation here in the expression of $D_if_i$ as functions of the parameters $K_i$ as opposed to the control inputs $u_i$. To ensure there is no confusion between $t$ and $n$, we also point out that $n$ indexes the policy-gradient algorithm iterations while $t$ indexes the time of the dynamical system.

To simplify notation, define
\[\textstyle \SigmaK{K} =\mb{E}_{z_0\sim \Do}\left[\sum_{t=0}^\infty z(t)z(t)^T\right],\]
where we use the subscript notation to denote the dependence on the collection of controllers $K=(\nplayerstrategy)$. Define also the initial state covariance matrix
\[\textstyle \Sigma_0=\mb{E}_{z_0 \sim D_0}[z_0z_0^T].\]
Direct computation verifies that for player $i$, $D_if_i$ is given by:
\begin{align}
D_if_i(\nplayerstrategy)=2(R_{i}K_i-B_i^TP_i\Aclbar)\SigmaK{K},
\end{align}
where $\Aclbar=A-\sum_{i=1}^\nplayers B_iK_i$, is the closed--loop dynamics given all players' control inputs and, for given $(\nplayerstrategy)$, the matrix $P_i$ is the unique positive definite solution to the  Bellman equation:
 \begin{align}
P_i & = \Aclbar^TP_i\Aclbar+ K_i^TR_iK_i +Q_i, \ \
i\in\{1, \ldots, \nplayers\}.
\label{eq:Bellman}
\end{align}

Given that the players may have different control objectives and do not engage in coordination or cooperation, the best they can hope to achieve is a Nash equilibrium. 
\begin{definition}
A \emph{feedback Nash equilibrium} is a collection of policies $(\nplayerstrategyast)$ such that: 
\begin{equation*}
    f_i(K_1^*,\ldots,K_i^*,\ldots,K_\nplayers^*)\le f_i(K_1^*,\ldots,K_i,\ldots,K_\nplayers^*), \ \   \forall\ K_i\in \mb{R}^{ d_i \times m }.
\end{equation*}
for each $i \in \{1,\ldots,\nplayers\}$.
\label{def:feedbacknash}
\end{definition}
Under suitable assumptions on the cost matrices, the Nash equilibrium of an LQ game is known to exist in the space of linear policies~\cite{BasarOlsder,LyapIterCitation}. However, this Nash equilibrium may not be unique. To the best of our knowledge, there are no general set of conditions under which the Nash equilibrium is unique in general-sum LQ games outside of the scalar dynamics setting \cite{scalarNash}. There are, however, algebraic geometry methods to compute all Nash equilibria in LQ games \cite{allNash}. We make use of a simpler algorithm to find Nash equilibria which solves coupled Ricatti equations using the method of Lyapunov iterations. The method is outlined in \cite{LyapIterCitation} for continuous time LQ games, and an analogous procedure can be followed for discrete time. 
Convergence of this method requires the following assumption. 

\begin{assumption}
For at least one player $i \in \{1,\ldots,\nplayers\}$, $(A,B_i)$ is stabilizable. 
\label{ass:stable}
\end{assumption}

Assumption~\ref{ass:stable} is a necessary condition for the players to be able to stabilize the system. Indeed, the player's costs are finite only if the closed loop system $\Aclbar$ is asymptotically stable, meaning that $|\Re(\lambda)|<1$ for all $\lambda \in \spec(\Aclbar)$, where $\Re(\lambda)$ denotes the real part of $\lambda$ and $\spec(M)$ is the spectrum of a matrix $M$. 


\section{Analyzing the Optimization Landscape of LQ Games}
\label{sec:thm}

Having introduced the class of games we consider we now analyze the optimization landscape in general-sum LQ games. Letting $x=(\nplayerstrategy)$, the object of interest is the map $\omega:\mb{R}^{md}\rar \mb{R}^{md}$
defined as follows:
\[ \omega(x)=\bmat{D_1f_1(\nplayerstrategy)\\ \vdots \\ D_\nplayers f_\nplayers(\nplayerstrategy)}.\]
Note that $D_if_i=\partial f_i/\partial K_i$ has been converted to an $md_i$ dimensional vector and each $K_i$ has also been vectorized. This is a slight abuse of notation and throughout we treat the $K_i$'s as both vectors and matrices; in general, the shape should be clear from context, and otherwise we make comments where necessary to clarify.

Before analyzing the stationary points of policy-gradient in LQ games, we show that the class of LQ games we consider are \emph{not} convex games. This holds despite the linearity of the dynamics and the positive definiteness of the cost matrices. This fact makes the analysis of such games non-trivial since the lack of strong structural guarantees on the players' costs allows for non-trivial limiting behaviors like cycles, non-Nash equilibria, and chaos in the joint gradient dynamics. \cite{paper:old}.

\begin{proposition}
There exists a $\nplayers$-player LQ game satisfying assumptions \ref{ass:coststatematrix} and \ref{ass:stable} that is not a convex game.
\label{prop:convex}
\end{proposition}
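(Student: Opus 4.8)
The plan is to produce one explicit game together with one player whose cost is not convex in that player's own feedback matrix; since a game is \emph{convex} when the strategy sets are convex (here each is $\mb{R}^{d_i\times m}$, which is convex) and each $f_i(\cdot,K_{-i})$ is convex for every fixed choice $K_{-i}$ of the other players' policies, it suffices to violate convexity of a single $f_i$ in $K_i$.

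The reduction to the single-agent case is the crux. Fix a candidate $\nplayers$-player game and consider the slice in which every player $j\neq 1$ uses $K_j=0$. Then $\Aclbar=A-B_1K_1$, the Bellman equation \eqref{eq:Bellman} for $P_1$ depends only on $(A,B_1,Q_1,R_1,K_1)$, and $\SigmaK{K}$ likewise depends only on $K_1$, so $f_1(K_1,0,\ldots,0)$ coincides exactly with the standard LQR cost $C(K_1)=\mb{E}_{z_0\sim\Do}\big[\sum_{t\ge0}z(t)^TQ_1z(t)+u_1(t)^TR_1u_1(t)\big]$ of the single-agent system $(A,B_1)$ with cost matrices $Q_1\succ0$, $R_1\succ0$, defined on the (open, connected) set of $K_1$ for which $A-B_1K_1$ is Schur stable. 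It is known that this LQR cost can fail to be convex on that set (see \citet{FazelLQR}); the standard witness is a low-dimensional controllable system with $m\ge2$, and indeed $m\ge2$ is necessary, since for scalar dynamics $C(\cdot)$ is convex on the stabilizing interval. I would therefore take such a triple $(A,B_1,Q_1,R_1)$, add a second player with any $B_2$ and any $Q_2\succ0$, $R_2\succ0$ (and, if desired, more players), and declare this to be the game.

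It then remains to check the two standing assumptions and to conclude. Assumption~\ref{ass:coststatematrix} holds by construction, all cost matrices being positive definite. Assumption~\ref{ass:stable} holds because $(A,B_1)$, being controllable in the chosen LQR instance, is in particular stabilizable. Finally, nonconvexity of $C(\cdot)=f_1(\cdot,0,\ldots,0)$ means there are three collinear gains $K_1^{(0)},\,K_1^{(1)}=\tfrac12\big(K_1^{(0)}+K_1^{(2)}\big),\,K_1^{(2)}$, all stabilizing, with $f_1(K_1^{(1)},0,\ldots,0)>\tfrac12 f_1(K_1^{(0)},0,\ldots,0)+\tfrac12 f_1(K_1^{(2)},0,\ldots,0)$; hence $f_1(\cdot,0,\ldots,0)$ is not convex, so the game is not a convex game.

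The only real work — and the main obstacle — is exhibiting the explicit nonconvex LQR instance and checking that the three witnessing gains all lie in its stabilizing set; this is a finite computation that can be imported verbatim from \citet{FazelLQR} or redone with a small $2$- or $3$-state example. One should also be mindful of well-posedness: convexity of each $f_i$ is meant relative to its effective domain (the stabilizing set, on which $f_i$ is finite), and the midpoint-violation formulation above makes explicit that all test points are chosen there. Alternatively one could look for a source of nonconvexity that genuinely uses the coupling between players — a point with some $K_j\neq0$ at which $f_1$ bends the wrong way even though $f_1(\cdot,0,\ldots,0)$ is convex — but the embedding above already settles the proposition.
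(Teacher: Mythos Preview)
Your proposal is correct and follows essentially the same approach as the paper: fix the other players' strategies, observe that player~$1$ then faces a single-agent LQR problem, and invoke the nonconvexity result of \citet{FazelLQR}. Your write-up is in fact more careful than the paper's two-line proof --- you explicitly check Assumptions~\ref{ass:coststatematrix} and~\ref{ass:stable}, distinguish cost nonconvexity from domain nonconvexity, and note that the midpoint witness must lie in the stabilizing set --- but the underlying idea is identical.
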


\begin{proof}
 The proof of Proposition~\ref{prop:convex} follows directly from the non-convexity of the set of stabilizing policies for the single-agent LQR problem which was shown in \cite{FazelLQR}. Holding every other players' actions fixed,  a player $i$ is faced with a simple LQR problem. Since this problem is non-convex, LQ games are not convex games.
\end{proof}

In the absence of strong structural guarantees on the players' costs, simultaneous gradient-play in general-sum games can converge to strategies that are not Nash equilibria \citep{paper:old}. The following theorem shows that, despite the fact that LQ games are not convex for each player, such non-Nash equilibria cannot exist in the gradient dynamics of general-sum LQ games. Indeed, we show that a point $x$ is a critical point of the policy gradient dynamics in a $\nplayers$-player LQ game if and only if it is a Nash equilibrium. We note that critical points of gradient-play are strategies $x=(\nplayerstrategy)$ such that $\omega(x)=0$. Such points are of particular importance since a necessary condition for a point $x$ to be a Nash equilibrium is that it is a critical point. 

\begin{theorem}
Consider the set of stabilizing policies $x^*=(\nplayerstrategyast)$ such that $\SigmaK{K^\ast}>0$. $D_if_i(\nplayerstrategyast)=0$ for each $i\in\{1, \ldots, \nplayers\}$, if and only if $x^*$ is a Nash equilibrium.
\label{thm:cp}
\end{theorem}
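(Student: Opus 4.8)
The plan is to establish the equivalence by exploiting the single-agent structure that each player faces when the other players' policies are frozen. The forward direction (Nash $\Rightarrow$ critical point) is essentially definitional: if $x^* = (\nplayerstrategyast)$ is a feedback Nash equilibrium, then for each $i$ the policy $K_i^*$ is a global minimizer of the map $K_i \mapsto f_i(K_1^*, \ldots, K_i, \ldots, K_\nplayers^*)$ over all of $\mb{R}^{d_i \times m}$, and since $f_i$ is differentiable in $K_i$ on the set of stabilizing policies, the gradient $D_i f_i(\nplayerstrategyast)$ must vanish. So the real content is the converse: a critical point of the joint gradient dynamics must be a Nash equilibrium.

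For the converse, I would fix a critical point $x^* = (\nplayerstrategyast)$ with $\SigmaK{K^*} > 0$ and argue player-by-player. With the other players' policies held fixed at $K_{-i}^*$, player $i$ faces a standard single-agent LQR problem with modified (but still stabilizing-admitting) dynamics matrix $A - \sum_{j \neq i} B_j K_j^*$, state cost $Q_i \succ 0$, and control cost $R_i \succ 0$. The key facts to invoke about this LQR problem are: (i) its cost $f_i$ as a function of $K_i$ is real-analytic on the open set of stabilizing feedback gains and has compact sublevel sets there, so it attains a minimum in the interior; (ii) at any stationary point one has $D_i f_i = 2(R_i K_i - B_i^T P_i \Aclbar)\SigmaK{K}$, and since $\SigmaK{K} \succ 0$ is invertible, $D_i f_i = 0$ forces $R_i K_i = B_i^T P_i \Aclbar$, i.e. $K_i = R_i^{-1} B_i^T P_i \Aclbar$ with $P_i$ solving the Bellman equation~\eqref{eq:Bellman}; and (iii) this fixed-point relation together with the Bellman equation is exactly the LQR optimality (Riccati) condition, whose stabilizing solution is unique. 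Hence the only stationary point of player $i$'s LQR problem in the stabilizing region is its unique global optimum, so $K_i^*$ is a global best response to $K_{-i}^*$. Since this holds for every $i$, $x^*$ is a Nash equilibrium.

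The main obstacle is step (iii) — pinning down that the stationarity condition $K_i = R_i^{-1} B_i^T P_i \Aclbar$ combined with~\eqref{eq:Bellman} has a \emph{unique} stabilizing solution, which is the global LQR optimum rather than merely a spurious stationary point. This is where I would lean on the classical LQR theory (as used in \cite{FazelLQR}): substituting the stationarity relation into the Bellman equation collapses it to the discrete-time algebraic Riccati equation, and under $Q_i \succ 0$, $R_i \succ 0$, and stabilizability the stabilizing solution of the DARE is unique and yields the optimal cost. One subtlety to handle carefully is the role of the assumption $\SigmaK{K^*} > 0$: it guarantees that vanishing of the matrix product $D_i f_i = 2(R_i K_i^* - B_i^T P_i \Aclbar)\SigmaK{K^*}$ is equivalent to vanishing of the first factor, which is what makes the gradient condition coincide with the Riccati stationarity condition; without positive-definiteness of $\SigmaK{K^*}$ the equivalence could fail. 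I would also remark that the closed-loop stability of $\Aclbar$ at $x^*$ (needed so that $P_i$ and $\SigmaK{K^*}$ are well-defined) is part of the hypothesis that $x^*$ is a stabilizing policy.
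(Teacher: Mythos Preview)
Your proposal is correct and follows essentially the same approach as the paper: both reduce the nontrivial direction (critical point $\Rightarrow$ Nash) to the single-agent LQR problem each player faces when the others are frozen, and both use $\Sigma_{K^\ast}\succ 0$ to strip the state-covariance factor from $D_if_i$. The only cosmetic difference is that the paper argues by contradiction and invokes Corollary~4 of \cite{FazelLQR} for the ``no spurious stationary points in LQR'' fact, whereas you sketch that fact directly via uniqueness of the stabilizing solution to the discrete algebraic Riccati equation.
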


\begin{proof}
 We prove the forward direction and show that if $D_if_i(x^\ast)=0$ for each $i\in\{1, \ldots, \nplayers\}$, then $x^*$ is a Nash equilibrium. We show this by contradiction. Suppose the claim does not hold so that $\Sigma_{K^\ast}>0$ and $D_if_i(\nplayerstrategyast)=0$ for each $i \in \{1,\ldots,\nplayers\}$, yet $(\nplayerstrategyast)$ is not a Nash equilibrium. That is, without loss of generality, there exists a $\Ktempvar{1}$ such that
 \[f_1(\Ktempvar{1}, K_2^*,\ldots,K^*_\nplayers)<f_1(\nplayerstrategyast).\]
 Now, fixing $(K_2^*,\ldots,K_\nplayers^*)$, player 1 can be seen as facing an LQR problem.  Indeed, letting $(K_2^*,\ldots,K_\nplayers^*)$ be fixed, player 1 aims to find a `best response' in the space of linear feedback policies of the form $u_1(t)=Kz(t)$ with $K\in \mb{R}^{d_i\times m}$ that minimizes $f_1(\cdot, K_2^*,\ldots,K_\nplayers^*)$ subject to the dynamics defined by
 \[\textstyle z(t+1)=\left(\Acl{2}\right)z(t)+B_1u_1(t).\]
 Note that this system is necessarily stabilizable since $\Aclbar$ is stable. Hence, the discrete algebraic Riccati equation for player 1's LQR problem has a positive definite solution $P$ such that $R_1+B_1^TPB_1>0$ since $R_1>0$ by assumption. Since $\SigmaK{K^\ast}>0$ and $D_1f_1(\nplayerstrategyast)=0$, applying Corollary 4 of \cite{FazelLQR}, we have that $K^*_1$ must be optimal for player 1's LQR problem so that
 \[f_1(\nplayerstrategyast)\leq f_1(K,K_2^*,\ldots,K^*_\nplayers), \ \ \forall\ K\in \mb{R}^{d_1 \times m}.\]
 In particular, the above inequality holds for $\Ktempvar{1}$, which leads to a contradiction. 
 
 To prove the reverse direction, we note that a necessary condition for a point $x$ to be a Nash equilibrium for each player, is that $D_if_i(x^\ast)=0$ for each $i\in\{1, \ldots, \nplayers\}$ \cite{ratliff:2013aa}. 
\end{proof}

Theorem~\ref{thm:cp} shows that, just as in the single-player LQR setting and zero-sum LQ games, the critical points of gradient-play in $\nplayers$--player general-sum LQ games are all Nash equilibria. We note that the condition $\SigmaK{K}>0$ can be satisfied by choosing an initial state distribution $\Do$ with a full-rank covariance matrix.

A simple consequence of Theorem~\ref{thm:cp} is that when the coupled Ricatti equations characterizing the Nash equilibria of the game have a unique positive definite solution and Assumptions~\ref{ass:coststatematrix} and~\ref{ass:stable} hold, the gradient dynamics admit a unique critical point. 

\begin{corollary}
Under Assumption~\ref{ass:coststatematrix} and~\ref{ass:stable}, if the coupled Ricatti equations admit a unique solution and $\Sigma_0\succ0$, then the map $\omega$ has a unique critical point.
\label{cor:unique}
\end{corollary}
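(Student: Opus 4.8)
The plan is to read this off Theorem~\ref{thm:cp} together with the classical correspondence between feedback Nash equilibria of an LQ game and positive definite solutions of the coupled Ricatti equations. Since Theorem~\ref{thm:cp} already identifies the critical points of $\omega$ among stabilizing policies with $\SigmaK{K}>0$ with the Nash equilibria of the game, it suffices to establish two things: first, that the hypotheses of the corollary force \emph{every} critical point of $\omega$ into the regime covered by Theorem~\ref{thm:cp}; and second, that a unique solution to the coupled Ricatti equations gives exactly one feedback Nash equilibrium.

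First I would dispose of the positivity condition. The map $\omega$ is only defined where the $P_i$ of~\eqref{eq:Bellman} and $\SigmaK{K}$ exist, i.e., on the open set of jointly stabilizing policies $K=(\nplayerstrategy)$ for which $\Aclbar$ is asymptotically stable; this is the natural domain, since otherwise the costs $f_i$ are infinite. On this domain, $\SigmaK{K}=\mb{E}_{z_0\sim\Do}\big[\sum_{t\ge0} z(t)z(t)^T\big]\succeq \mb{E}_{z_0\sim\Do}[z_0z_0^T]=\Sigma_0\succ0$, so the hypothesis $\SigmaK{K}>0$ of Theorem~\ref{thm:cp} is automatic. Hence a point $x$ is a critical point of $\omega$ (a stabilizing $x$ with $\omega(x)=0$) if and only if it is a feedback Nash equilibrium.

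It then remains to count Nash equilibria, and here I would invoke the standard characterization (see \citep{BasarOlsder,LyapIterCitation}): a collection of stabilizing linear feedback policies $(\nplayerstrategyast)$ is a feedback Nash equilibrium if and only if the associated matrices $(P_1,\ldots,P_\nplayers)$ solve the coupled algebraic Ricatti equations, with each $K_i^\ast$ recovered from the $P_j$'s through the stationarity relation $R_iK_i^\ast=B_i^TP_i\Aclbar$ (the relation obtained by setting $D_if_i=0$ and using $\SigmaK{K^\ast}>0$). Under Assumptions~\ref{ass:coststatematrix} and~\ref{ass:stable} every feedback Nash equilibrium has finite cost and is therefore stabilizing, so this correspondence misses nothing; moreover $R_i\succ0$ makes $K_i^\ast$ a genuine function of the Ricatti tuple, so distinct Nash equilibria cannot share a solution. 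A unique solution to the coupled Ricatti equations thus yields a unique Nash equilibrium, and by the previous paragraph a unique critical point of $\omega$.

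The step I expect to be the main obstacle is making the equivalence ``Nash equilibrium $\Leftrightarrow$ Ricatti solution'' genuinely bijective rather than merely one-directional: one must argue both that every Nash equilibrium is stabilizing (so no solutions are lost by restricting to the domain of $\omega$) and that the map from Nash equilibria to Ricatti tuples is injective. Both facts are standard in the LQ-games literature but should be stated explicitly; everything else is an immediate consequence of Theorem~\ref{thm:cp} and the bound $\SigmaK{K}\succeq\Sigma_0$.
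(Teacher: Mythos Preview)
Your proposal is correct and follows the same reasoning the paper intends: the corollary is presented there as an immediate consequence of Theorem~\ref{thm:cp} together with the standard identification of feedback Nash equilibria with solutions of the coupled Ricatti equations, and you have simply made explicit the two steps the paper leaves implicit (the bound $\SigmaK{K}\succeq\Sigma_0$ and the Nash--Ricatti bijection).
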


 Given that the critical points of the gradient dynamics in LQ games are Nash equilibria, the aim is to show, via constructing counter-examples, that games  in which the gradient dynamics avoid the Nash equilibria do in fact exist. A sufficient condition for this would be to find a game in which gradient-play diverges from neighborhoods of Nash equilibria. 

It is demonstrated in \cite{paper:old} that there may be Nash equilibria that are not even \emph{locally attracting} under the gradient dynamics in $\nplayers$--player general-sum games in which the players' costs are sufficiently smooth (i.e., at least twice continuously differentiable). In games that admit such Nash equilibria, the agents could initialize arbitrarily close to the Nash equilibrium, simultaneously perform individual gradient descent with arbitrarily small step sizes, and still diverge. 

The class of $\nplayers$--player LQ games we consider does not, however, satisfy the smoothness assumptions necessary to simply invoke the results in \cite{paper:old}. Indeed, the cost functions are non-smooth and, in fact, are infinite whenever the players have strategies that do not stabilize the dynamics. Further, the set of stabilizing policies for a dynamical system is not even convex \citep{FazelLQR}. Despite these challenges, in the sequel we show that the negative convergence results in \cite{paper:old} extend to the general-sum LQ setting. In particular, we show that even with arbitrarily small step sizes, players using policy-gradient in LQ games may still diverge from neighborhoods of a Nash equilibrium.


\section{Sufficient Conditions for Policy-Gradient to Avoids Nash}
\label{sec:nonconv}

We now give sufficient conditions under which gradient-play has no guarantees of even \emph{local}, much less global, convergence to a Nash equilibrium.  Towards this end, we first show that $\omega$ is sufficiently smooth  on the set of stabilizing policies.

 Let $\mc{S}^{md}\subset \mb{R}^{md}$ be the subset of stabilizing $md$--dimensional matrices.
\setcounter{smoothpropnum}{\value{theorem}}
\begin{proposition}
Consider an $\nplayers$--player LQ game. The vector-valued map $\omega$ associated with the game is twice continuously differentiable on $\mc{S}^{md}$---i.e., $\omega\in C^2(\mc{S}^{md}, \mc{S}^{md})$.
\label{prop:smooth}
\end{proposition}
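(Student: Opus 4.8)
The plan is to show that $\omega$ is in fact real-analytic---a fortiori $C^2$---on $\mc{S}^{md}$ by exhibiting each of its blocks as a composition of smooth maps. Recall that the $i$-th block is $D_if_i(\nplayerstrategy)=2(R_iK_i-B_i^TP_i\Aclbar)\SigmaK{K}$, so it suffices to show that the three $K$-dependent quantities here---the closed-loop matrix $\Aclbar=A-\sum_{j=1}^\nplayers B_jK_j$, the Bellman solution $P_i$ of~\eqref{eq:Bellman}, and $\SigmaK{K}$---depend smoothly on $K$ on the open set $\mc{S}^{md}$. (That $\mc{S}^{md}$ is open follows from continuity of the spectral radius together with continuity of $K\mapsto\Aclbar(K)$, since $\mc{S}^{md}=\{K:\rho(\Aclbar(K))<1\}$.)

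First, $K\mapsto\Aclbar(K)$ is affine in $K$, hence trivially smooth. Next I would observe that on $\mc{S}^{md}$ both $\SigmaK{K}$ and $P_i$ are the unique solutions of discrete Lyapunov equations, $\SigmaK{K}=\Sigma_0+\Aclbar\SigmaK{K}\Aclbar^T$ and $P_i=\Aclbar^TP_i\Aclbar+(K_i^TR_iK_i+Q_i)$, with closed forms $\SigmaK{K}=\sum_{t\geq0}\Aclbar^t\Sigma_0(\Aclbar^T)^t$ and $P_i=\sum_{t\geq0}(\Aclbar^T)^t(K_i^TR_iK_i+Q_i)\Aclbar^t$, the series converging because $\rho(\Aclbar)<1$. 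Vectorizing, $\vect(\SigmaK{K})=(I-\Aclbar\otimes\Aclbar)^{-1}\vect(\Sigma_0)$ and $\vect(P_i)=(I-\Aclbar^T\otimes\Aclbar^T)^{-1}\vect(K_i^TR_iK_i+Q_i)$, where $I-\Aclbar\otimes\Aclbar$ is invertible on $\mc{S}^{md}$ since its spectrum is $\{1-\lambda\mu:\lambda,\mu\in\spec(\Aclbar)\}$ and $|\lambda\mu|\leq\rho(\Aclbar)^2<1$. Because matrix inversion is real-analytic on the open set of invertible matrices, and $\Aclbar\otimes\Aclbar$, $\vect(\Sigma_0)$, and $\vect(K_i^TR_iK_i+Q_i)$ are polynomial (hence smooth) in $K$, it follows that $K\mapsto\SigmaK{K}$ and $K\mapsto P_i$ are smooth on $\mc{S}^{md}$. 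Then $D_if_i$ is built from $\Aclbar$, $P_i$, $\SigmaK{K}$ by matrix multiplication and addition, so $D_if_i$ is smooth; stacking the $\nplayers$ blocks, $\omega$ is smooth, in particular $\omega\in C^2(\mc{S}^{md},\mc{S}^{md})$. (Alternatively, one can avoid the explicit inversion formulas and apply the implicit function theorem to $G_i(P,K):=P-\Aclbar^TP\Aclbar-K_i^TR_iK_i-Q_i$ and the analogous map for $\SigmaK{K}$; the partial derivative in $P$ is the Lyapunov operator $X\mapsto X-\Aclbar^TX\Aclbar$, invertible on $\mc{S}^{md}$ by the same spectral argument.)

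The only step that is more than routine bookkeeping is the passage from the \emph{implicitly defined} objects $P_i$ and $\SigmaK{K}$ to an explicit smooth dependence on $K$; the Kronecker-product/vectorization reformulation (equivalently, the implicit function theorem) is what makes this rigorous, and it relies crucially on $\rho(\Aclbar)<1$ holding throughout $\mc{S}^{md}$, which is precisely what it means for the policies to be stabilizing.
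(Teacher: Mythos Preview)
Your proof is correct and follows essentially the same route as the paper: both vectorize the discrete Lyapunov equations for $\SigmaK{K}$ and $P_i$ and use that the Lyapunov operator $X\mapsto X-\Aclbar X\Aclbar^T$ (equivalently $I-\Aclbar\otimes\Aclbar$ after vectorization) is invertible on $\mc{S}^{md}$ because $\rho(\Aclbar)<1$. The only cosmetic difference is that the paper invokes the implicit function theorem to conclude $\SigmaK{K},P_i\in C^1$, whereas you write down the explicit inverse $(I-\Aclbar\otimes\Aclbar)^{-1}$ and observe that matrix inversion is real-analytic---your alternative remark about the implicit function theorem is exactly the paper's argument, and your explicit formulation has the advantage of directly delivering $C^\infty$ (indeed $C^\omega$) rather than just $C^1$, making the claimed $C^2$ regularity of $\omega$ immediate.
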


Using our notation, Lemma 6.5 in \cite{basarLQ} shows for two-player zero-sum LQ games that $(P_1,P_2)$, and $\SigmaK{K}$ are continuously differentiable with respect to $K_1$ and $K_2$ when $A-B_1K_1-B_2K_2$ is stable. This, in turn, implies that $\omega(K_1,K_2)$ is continuously differentiable with respect to $K_1$ and $K_2$ when the closed loop system $A-B_1K_1-B_2K_2$ is stable. The result follows by a straightforward application of the implicit function theorem~\citep{marsden:1988aa}. We utilize the same proof technique here in extending the result to $\nplayers$--player general-sum LQ games and, in fact, the proof implies that $\omega$ has even stronger regularity properties. Since the proof follows the same techniques as in \cite{basarLQ}, we defer it to Appendix~\ref{app:proofs}.


Given that $\omega$ is continuously differentiable over the set of stabilizing joint policies $(\nplayerstrategy)$, the following result gives sufficient conditions such that the set of initial conditions in a neighborhood of the Nash equilibrium from which gradient-play converges to the Nash equilibrium is of measure zero. This implies that the players will almost surely avoid the Nash equilibrium even if they randomly initialize in a uniformly small ball around it. 

Let the Jacobian of the vector field $\omega$ be denoted by $D\omega$. Given a critical point $x^\ast$, let $\lambda_j$  be the eigenvalues of $D\omega(x^\ast)$, for $j\in\{1, \ldots, md\}$, where $d=\sum_{i=1}^{n}d_i$.  Recall that the state $z(t)$ is dimension $m$.

\begin{theorem}
    Suppose that $\Sigma_0>0$. Consider any $\nplayers$--player LQ game satisfying Assumptions~\ref{ass:coststatematrix} and~\ref{ass:stable}  that admits a Nash equilibrium that is a saddle point of the policy-gradient dynamics---i.e., LQ games for which the Jacobian of $\omega$ evaluated at the Nash equilibrium $x^*=(\nplayerstrategyast)$ has eigenvalues $\lambda_j$ such that $\Re(\lambda_j)<0$ for $j\in\{1,\ldots,\ell\}$ and $\Re(\lambda_j)>0$ for $j\in\{\ell+1,\ldots,md\}$ for some $\ell$ such that $0<\ell<md$. Then there exists a neighborhood $U$ of $x^*$ such that policy-gradient converges on a set of measure zero.
   
    \label{thm:lqgame}
\end{theorem}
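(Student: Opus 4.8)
The plan is to recast the policy-gradient iteration as a discrete-time dynamical system and show that, under the stated spectral hypothesis, the Nash equilibrium $x^*$ is a hyperbolic \emph{unstable} fixed point of that system, so that the initializations from which the iterates converge to $x^*$ are confined to a submanifold of dimension strictly less than $md$ and hence form a Lebesgue-null set. Write the joint update as $x_{n+1}=g(x_n)$ with $g(x)=x-\Lambda\omega(x)$, where $\Lambda$ is the block-diagonal matrix collecting the (positive) step sizes $\gamma_1,\ldots,\gamma_\nplayers$, so $\Lambda\succ0$. Since $x^*$ is a Nash equilibrium, $\omega(x^*)=0$ by Theorem~\ref{thm:cp} (the hypothesis $\Sigma_0\succ0$ guarantees $\SigmaK{K^\ast}\succ0$, so $x^*$ is a genuine critical point of $\omega$ and a fixed point of $g$). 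By Proposition~\ref{prop:smooth}, $\omega\in C^2$ on a neighborhood of $x^*$ contained in the open stabilizing set $\mc{S}^{md}$, hence $g$ is $C^2$ there and $Dg(x^*)=I-\Lambda D\omega(x^*)$.

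The first step is to examine $\spec(Dg(x^*))$ for sufficiently small step sizes; working for concreteness with a common small $\gamma>0$ (differing small step sizes are handled the same way, as a perturbation), the eigenvalues of $Dg(x^*)$ are $1-\gamma\lambda_j$. For every $\gamma>0$ one has $|1-\gamma\lambda_j|\ge 1-\gamma\Re(\lambda_j)>1$ whenever $\Re(\lambda_j)<0$, while $|1-\gamma\lambda_j|^2=1-2\gamma\Re(\lambda_j)+\gamma^2|\lambda_j|^2<1$ whenever $\Re(\lambda_j)>0$ and $\gamma$ is small enough; for $\gamma$ small enough no eigenvalue lies on the unit circle and none vanishes. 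Thus, choosing the step sizes in this range, $x^*$ is a hyperbolic fixed point of $g$ whose unstable subspace has dimension $\ell$, and in particular $Dg(x^*)$ has an eigenvalue of modulus strictly larger than one and is nonsingular, so by the inverse function theorem $g$ is a $C^2$-diffeomorphism from a neighborhood of $x^*$ onto its image.

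Next I would apply the stable manifold theorem to $g$ at the hyperbolic fixed point $x^*$: there is a neighborhood $U$ of $x^*$ and an embedded $C^2$ submanifold $W^{s}_{\mathrm{loc}}(x^*)\subset U$ of dimension $md-\ell<md$ such that every orbit that remains in $U$ and converges to $x^*$ is contained in $W^{s}_{\mathrm{loc}}(x^*)$. Since a $C^1$ submanifold of positive codimension in $\mb{R}^{md}$ has zero Lebesgue measure, the set of initializations in $U$ from which policy-gradient converges to $x^*$ is null, which is the claim; orbits that converge to $x^*$ after temporarily leaving $U$ contribute only $\bigcup_{n\ge0}g^{-n}(W^{s}_{\mathrm{loc}}(x^*))$, still a countable union of null sets since $g$ is (locally) a diffeomorphism, so the conclusion is unaffected. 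This last bookkeeping is routine and is carried out exactly as for general smooth games in \cite{paper:old}.

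The manifold machinery is standard once the spectral picture is in place; the real obstacle is that $\omega$ is defined and $C^2$ only on the non-convex open set $\mc{S}^{md}$ and is infinite off it, so — unlike in \cite{paper:old} — one cannot quote a global avoidance statement and must instead localize every step (the admissible step-size window, the diffeomorphism property via Proposition~\ref{prop:smooth}, and the invocation of the stable manifold theorem) to a neighborhood of $x^*$ on which the relevant smoothness holds. A secondary point requiring care is pinning down the interval of step sizes that simultaneously makes $Dg(x^*)$ hyperbolic and invertible, and checking that the $\Re(\lambda_j)<0$ directions remain unstable for arbitrarily small $\gamma$ (which is what delivers the ``even local'' non-convergence, as emphasized before the statement).
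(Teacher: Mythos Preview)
Your proposal is correct and follows essentially the same route as the paper: regularity of $\omega$ via Proposition~\ref{prop:smooth}, a spectral analysis of $Dg(x^*)=I-\Gamma D\omega(x^*)$ showing an eigenvalue of modulus greater than one and no eigenvalue on the unit circle for sufficiently small step sizes, the stable manifold theorem, and then the countable-union-of-null-preimages argument. The only noteworthy difference is that the paper obtains the diffeomorphism property on a whole convex neighborhood $U$ by first extracting a Lipschitz constant $L$ for $\omega$ on $U$ and arguing injectivity directly from $\gamma_i<1/L$, whereas you invoke the inverse function theorem at $x^*$; both suffice, and your eigenvalue bookkeeping ($|1-\gamma\lambda_j|>1$ for $\Re(\lambda_j)<0$, $|1-\gamma\lambda_j|<1$ for $\Re(\lambda_j)>0$ and $\gamma$ small) is in fact more explicit than the paper's.
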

\begin{proof}
The proof is made up of three parts: (i) we show the existence of an open-convex neighborhood $U$ of $x^\ast$ on which $\omega$ is locally Lipschitz with constant $L$; (ii) we show that the map $g(x)=x-\Gamma\omega(x)$ is a diffeomorphism on $U$; and, (iii) we invoke the stable manifold theorem to show that the set of initializations in $U$ on which policy-gradient converges is measure zero.

\paragraph{(i) $\omega$ is locally Lipschitz.} Proposition~\ref{prop:smooth} shows that $\omega$ is continuously differentiable on the set of stabilizing policies $\mc{S}^{md}$. Given Assumptions~\ref{ass:coststatematrix} and \ref{ass:stable}, the Nash equilibrium exists and $x^* \in \mc{S}^{md}$. Thus, there must exist an open convex neighborhood $U$ of $x^*$ such that $||D\omega||_2<L$ for some $L>0$. 

\paragraph{(ii) $g$ is a diffeomorphism.}
By the preceding  argument, $\omega$ is locally Lipschitz on $U$ with Lipschitz constant $L$. Consider the policy-gradient algorithm with $\gamma_i<1/L$ for each $i\in\{1, \ldots, \nplayers\}$. 
        Let $\Gamma=\diag({\Gamma}_1, \ldots, {\Gamma}_\nplayers)$ where
        ${\Gamma}_i=\diag( (\gamma_i)_{j=1}^{md_i})$---that is,
        ${\Gamma}_i$ is an $md_i\times md_i$ diagonal matrix with $\gamma_i$
        repeated on the diagonal $md_i$ times. Now, we claim the mapping $g:\mb{R}^{md}\rar \mb{R}^{md}:x\mapsto x-\Gamma\omega(x)$ is a diffeomorphism on $U$. If we can
 show that $g$ is invertible on $U$ and a local diffeomorphism, then the claim follows.
Let us first prove that $g$ is invertible.

 Consider $x\neq y$ and suppose $g(y)=g(x)$ so that
    $y-x=\gamma\cdot (\omega(y)-\omega(x))$. 
       Since $\|\omega(y)-\omega(x)\|_2\leq L\|y-x\|_2$ on $U$, 
       $\|x-y\|_2\leq L\|\Gamma\|_2\|y-x\|_2<\|y-x\|_2$
       since $\|\Gamma\|_2=\max_i|\gamma_i|<1/L$.

       Now, observe that $Dg=I-\Gamma D\omega(x)$. If $Dg$ is invertible, then the
        implicit function theorem~\citep{marsden:1988aa} 
        implies that
        $g$ is a local diffeomorphism. Hence, it suffices to show that $\Gamma D\omega(x)$ does not have an eigenvalue equal to one. Indeed, letting $\rho(A)$ be
        the spectral radius of a matrix $A$, we know in general that
        $\rho(A)\leq \|A\|$ for any square matrix $A$ and induced operator norm
        $\|\cdot\|$ so that
         $\rho(\Gamma D\omega(x))\leq \|\Gamma D\omega(x)\|_2\leq
        \|\Gamma\|_2\sup_{x\in U}\|D\omega(x)\|_2<\max_i|\gamma_i|L<1$.
        Of course, the spectral radius is the maximum absolute value of the
        eigenvalues, so that the above implies that all eigenvalues of  $\Gamma
        D\omega(x))$ have absolute value less than one.

 Since $g$ is injective by the preceding argument, its inverse is well-defined and since $g$
        is a local diffeomorphism on $U$, it follows that $g^{-1}$ is
        smooth on $U$. Thus, $g$ is a diffeomorphism.

\paragraph{(iii) Local convergence occurs on a set of measure zero.} 
     Let $B$ be the open ball derived from Theorem~\ref{thm:centerstable} in Appendix~\ref{app:prelims}. 

            Starting from $x_0\in U$, if gradient-based learning
             converges to a strict saddle point, then there exists an $n_0$ such that $g^n(x_0)\in B$ for all $n\geq n_0$. 
             Applying Theorem~\ref{thm:centerstable} (Appendix~\ref{app:prelims}), we get that
             $g^n(x_0)\in W_{\text{loc}}^{cs}\cap B$. 
           Now, using the fact that $g$ is invertible, we can iteratively construct
             the sequence of sets defined by
             $W_1(x^\ast)=g^{-1}(W_{\text{loc}}^{cs}\cap B)\cap U$ and
             $W_{k+1}(x^\ast)=g^{-1}(W_k(x^\ast)\cap B)\cap U$. Then we have that $x_0\in
             W_n(x^\ast)$ for all $n\geq n_0$. The set $U_0=
             \cup_{k=1}^\infty W_k(x^\ast)$ contains all the initial points in $U$
             such that gradient-based learning converges to a strict saddle. 
     
             Since $x^\ast$ is a strict saddle, $I-\Gamma D\omega(x^\ast)$ has an
             eigenvalue greater than one. This implies that the
             co-dimension of the unstable manifold is strictly less than $md$ so that $\dim(W_{\text{loc}}^{cs})<md$.
            Hence,
             $W_{\text{loc}}^{cs}\cap B$ has Lebesgue measure zero in
             $\mb{R}^{md}$. 
             Using again that $g$ is a diffeomorphism, $g^{-1}\in C^1$ so that
             it is locally Lipschitz and locally Lipschitz maps are null-set
             preserving. Hence, $W_k(x^\ast)$ has measure zero for
             all $k$ by induction so that $U_0$ is a measure-zero set since
             it is a countable union of
             measure-zero sets.
\end{proof}

Theorem~\ref{thm:lqgame} gives sufficient conditions under which, with random initializations of $K_i$, policy-gradient methods would almost surely avoid the critical point. Let each players' initial strategy $K_{i,0}$ be sampled from a distribution $p_{i,0}$ for $i\in\{1,...,\nplayers\}$ , and let $p_0$ be the resulting the joint distribution of $(K_{1,0},\ldots,K_{\nplayers,0})$.

\begin{corollary}
Suppose $\Do$ is chosen such that $\Sigma_0\succ0$, and consider an $\nplayers$--player LQ game satisfying Assumptions~\ref{ass:coststatematrix} and ~\ref{ass:stable} in which there is a Nash equilibrium which is a saddle point of the policy-gradient dynamics. If each player $i\in\{1,\ldots,\nplayers\}$ performs policy-gradient with a random initial strategy $K_{i,0} \sim p_{i,0}$ such that the support of $p_0$ is $U$, they will almost surely avoid the Nash equilibrium.
\label{corr:almostsure}
\end{corollary}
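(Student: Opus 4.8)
The plan is to reduce the statement to Theorem~\ref{thm:lqgame} together with a one-line measure-theoretic argument. Theorem~\ref{thm:lqgame} furnishes, under exactly the standing hypotheses invoked here ($\Sigma_0\succ0$, Assumptions~\ref{ass:coststatematrix} and~\ref{ass:stable}, and a Nash equilibrium $x^\ast=(\nplayerstrategyast)$ that is a strict saddle of $\omega$), a neighborhood $U$ of $x^\ast$ on which, once the step sizes satisfy $\gamma_i<1/L$ with $L$ the local Lipschitz constant of $\omega$ furnished in part~(i) of that proof, the set
\[
U_0 \;=\; \{\, x_0 \in U \;:\; g^n(x_0) \to x^\ast \text{ as } n\to\infty \,\}
\]
has Lebesgue measure zero, where $g(x)=x-\Gamma\omega(x)$ is the policy-gradient map. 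So the first step is simply to record this fact: every initialization in $U\setminus U_0$ produces iterates that do not converge to the Nash equilibrium.

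Second, I would push this through the randomization. The joint initial condition $x_0=(K_{1,0},\ldots,K_{\nplayers,0})$ is distributed according to the product measure $p_0=\bigotimes_{i=1}^\nplayers p_{i,0}$ on $\mb{R}^{md}$, and by hypothesis $p_0$ is supported on $U$, i.e.\ $p_0(U)=1$. Assuming---as is standard for ``random initialization'', and as I would state explicitly as part of the hypothesis---that each $p_{i,0}$ is absolutely continuous with respect to Lebesgue measure on $\mb{R}^{md_i}$, the product $p_0$ is absolutely continuous with respect to Lebesgue measure on $\mb{R}^{md}$. Consequently the measure-zero set $U_0$ satisfies $p_0(U_0)=0$.

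Third, conclude: $\Pr(x_0\in U\setminus U_0)=p_0(U)-p_0(U_0)=1$, so with probability one the policy-gradient iterates started from $x_0$ do not converge to $x^\ast$; that is, the players almost surely avoid the Nash equilibrium.

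The content here is essentially bookkeeping on top of Theorem~\ref{thm:lqgame}, so I do not anticipate a genuine obstacle. The single point that needs care---and the only thing a careful reader is likely to flag---is that ``the support of $p_0$ is $U$'' is, on its own, not sufficient: a distribution supported on $U$ could be singular and concentrated on $U_0$. The remedy is precisely the absolute continuity (equivalently, non-atomicity on the relevant subspaces) of the $p_{i,0}$'s, which I would fold into the statement of the corollary; with that in place, Theorem~\ref{thm:lqgame}'s measure-zero conclusion immediately upgrades to the almost-sure one.
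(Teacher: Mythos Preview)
Your proposal is correct and matches the paper's approach exactly: the paper simply states that the corollary ``follows trivially from the fact that the set of initializations that converge to the Nash equilibrium is of measure zero in $U$,'' i.e., it is an immediate consequence of Theorem~\ref{thm:lqgame}. Your additional observation---that support on $U$ alone is not enough and that absolute continuity of $p_0$ with respect to Lebesgue measure is what actually transfers the measure-zero conclusion to an almost-sure one---is a valid technical point that the paper leaves implicit.
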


Corollary~\ref{corr:almostsure} shows that even if the players randomly initialize in a neighborhood of a Nash equilibrium that is a saddle point of the joint gradient dynamics they will almost surely avoid it. The proof follows trivially from the fact that the set of initializations that converge to the Nash equilibrium is of measure zero in $U$.

In the next section, we generate a large number of LQ games that satisfy the conditions of Corollary~\ref{corr:almostsure}. Taken together, these theoretical and numerical results imply that policy-gradient algorithms have no guarantees of local, and consequently global, convergence in general-sum LQ games. 

\begin{remark} Theorem~\ref{thm:lqgame} gives us sufficient conditions under which policy-gradient in general-sum LQ games does not even have \emph{local convergence guarantees}, much less global convergence guarantees. We remark that this is very different from the single-player LQR setting, where policy-gradient will converge from any initialization in a neighborhood of the optimal solution \citep{FazelLQR}. In zero-sum LQ games, the structure of the game also precludes any Nash equilibrium from satisfying the conditions of  Theorem~\ref{thm:lqgame} \citep{paper:old}, meaning that local convergence is always guaranteed. In \cite{basarLQ}, the guarantee of local convergence is strengthened to that of global convergence for a class of projected policy-gradient algorithms in zero-sum LQ games.
\end{remark}

\setcounter{totalcntnum}{\value{theorem}}

\section{Generating Counterexamples}
\label{sec:num}


Since it is difficult to find a simple closed form for the Jacobian of $\omega$ due to the fact that the matrices $P_i$ implicitly depend on all the $K_i$, we perform random search to find instances of LQ games in which the Nash equilibrium is a strict saddle point of the gradient dynamics. For each LQ game we generate, we use the method of Lyapunov iterations to find a global Nash equilibrium of the LQ game and numerically approximate the Jacobian to machine precision. We then check whether the Nash equilibrium is a strict saddle. Surprisingly, such a simple search procedure finds a large number of LQ games in which policy-gradient avoids Nash equilibria.

For simplicity, we focus on two-player LQ games where $z\in \mb{R}^2$ and $d_1=d_2=1$. Thus, each player $i=1,2$ has two parameters to learn, which we denote $K_{i,j}$, $j=1,2$. 

In the remainder of this section, we detail our experimental setup and then present our findings.

\subsection{Experimental setup}
To search for examples of LQ games in which policy-gradient avoids Nash equilibria, we fix $B_1$, $Q_1$, and $R_1$ and parametrize $B_2$, $Q_2$, and $R_2$ by $b$, $q$, and $r$, respectively. For various values of the parameters $b$, $q$, and $r$, we uniformly sample $1000$ different dynamics matrices $A \in \mb{R}^{2 \times 2}$ such that $A,B_1,Q_1$ satisfies Assumption~\ref{ass:stable}. Then, for each of the $1000$ different LQ games we find the optimal feedback matrices $(K_1^*, K_2^*)$ using the method of Lyapunov iterations (i.e., a discrete time variant of the algorithm outlined in \cite{LyapIterCitation}), and then numerically approximate $D\omega(K_1^*,K_2^*)$ using auto-differentiation\footnote{We use auto-differentiation due to the fact that finding an analytical expression for $D\omega$ is unduly arduous even in low dimensions due to the dependence of $P_i$ and $\SigmaK{K_1,K_2}$ on $(K_1,K_2)$, both of which are implicitly defined.} tools and check its eigenvalues.

The exact values of the matrices are defined as follows: 
\begin{align*}
    A \in \mb{R}^{2\times 2}: a_{i,j}\sim {\mathrm{Uniform}}&(0,1)\ \quad  i,j=1,2,\\
    B_1=\begin{bmatrix}1\\1\end{bmatrix}, \
    B_2=\begin{bmatrix}b\\1\end{bmatrix}, \ Q_1=\begin{bmatrix}0.01 & 0\\0 &
        1\end{bmatrix}&, \ Q_2=\begin{bmatrix}1 & 0\\0 & q \end{bmatrix}, R_1=0.01, \ R_2=r.
\end{align*}

\subsection{Numerical results}

Using the setup outlined in the previous section we randomly generated LQ games to search for counterexamples. We first present results that show that these counterexamples may be quite common. We then use policy-gradient in two of the LQ games we generated and highlight the existence of limit cycles and the fact that the players' time-averaged strategies do not converge to the Nash equilibrium. 

\begin{figure*}[ht]
\center    
      \includegraphics[width=\linewidth]{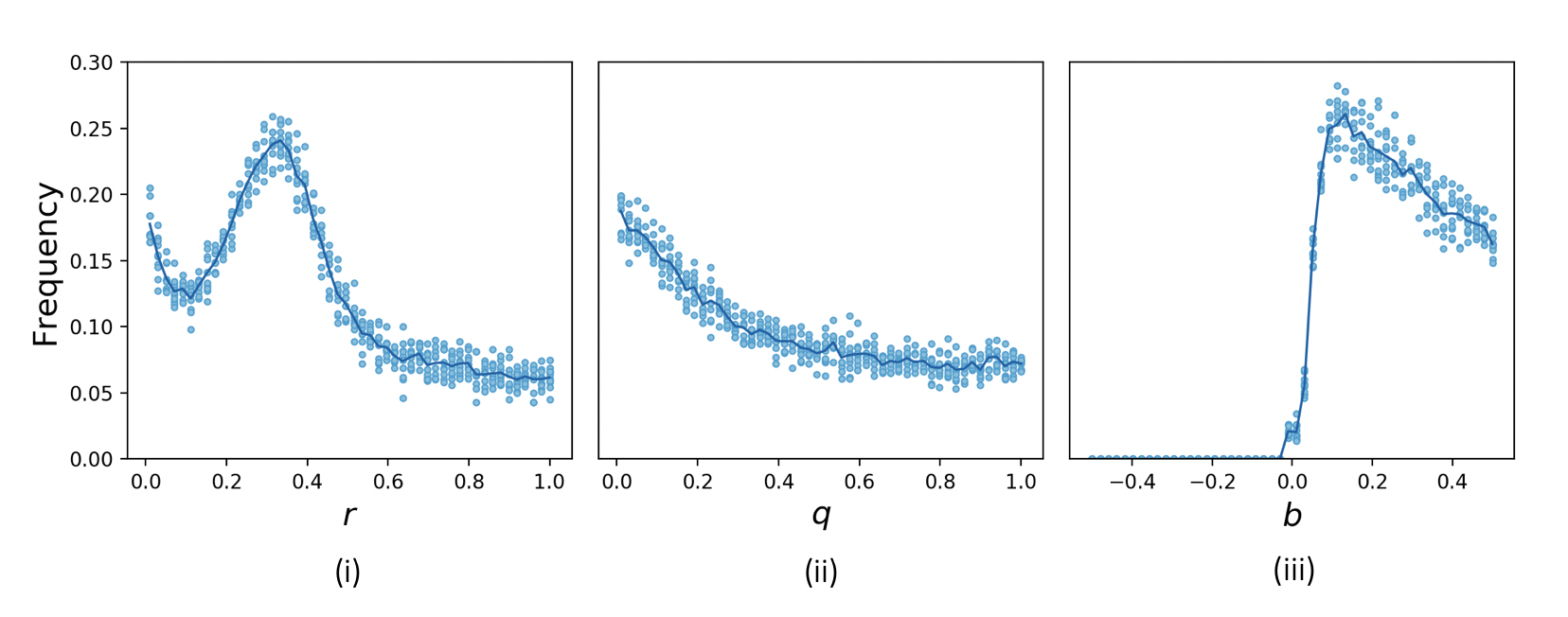}
        \caption{Frequency (out of 1000) of randomly sampled LQ games with global Nash equilibria that are avoided by policy-gradient. Each point represents, for the given parameter value, the frequency of such games out of 1000 randomly sampled $A$ matrices. The solid line shows the average frequency of these games.   (i) $r$ is varied in $(0,1)$, $b=0$, $q=0.01$. (ii) $q$ is varied in $(0,1)$, $b=0$, $r=0.1$. (iii) $b$ is varied in $(-0.5,0.5)$, $q=0.01$, $r=0.1$.}
        \label{fig:lqr}
\end{figure*}

\paragraph{Avoidance of Nash in a nontrivial class of LQ games.} 

As can be seen in Figure~\ref{fig:lqr}, across the different parameter values we considered, we found that anywhere from $0\%$ to $25\%$ of randomly sampled LQ games, had Nash equilibria that are strict saddle points of the gradient dynamics. Therefore, in up to $25\%$ of the LQ games we generated policy-gradient would almost surely avoid a Nash solution. Of particular interest, for all values of $q$ and $r$ that we tested, when $b=0$ at least $5\%$ of the LQ games had a global Nash equilibrium with the strict saddle property. 

\begin{figure}[h]
\center    
      \includegraphics[width=\textwidth]{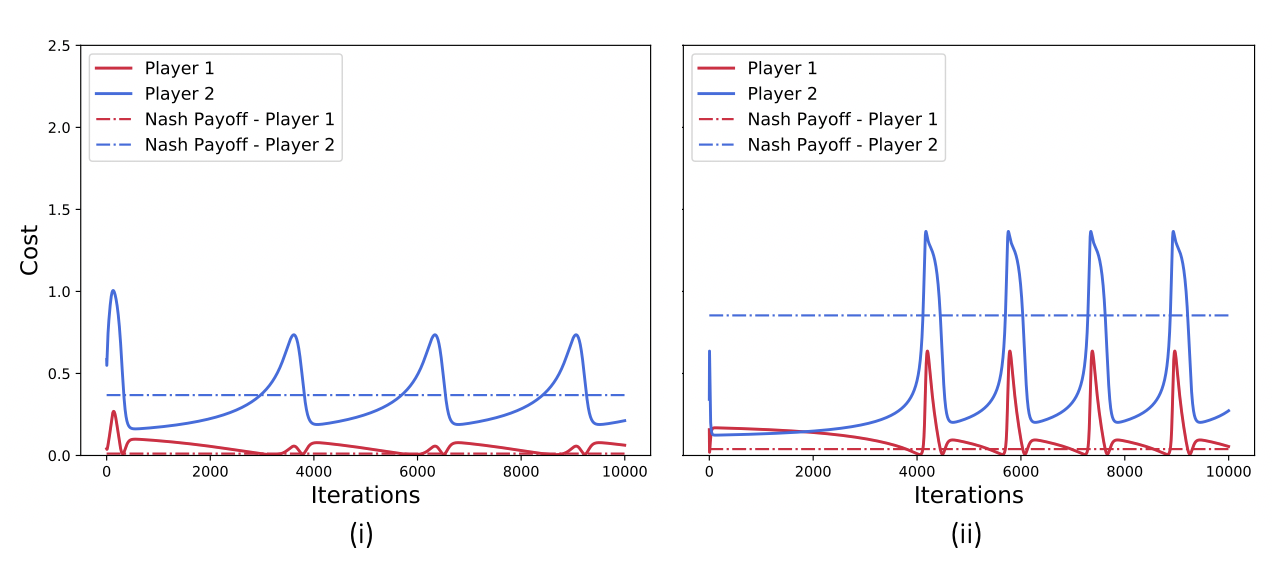}
        \caption{Payoffs of the two players in two general-sum LQ game where there is a Nash equilibrium that is avoided by the gradient dynamics. We observe empirically that in both games the two players diverge from the local Nash equilibrium and converge to a limit cycle around the Nash equilibrium.}
        \label{fig:path}
\end{figure}

These empirical observations imply that policy-gradient in competitive settings, even in the relatively straightforward setting of linear dynamics, linear policies, and quadratic costs, could fail to converge to a Nash equilibrium in up to one out of four such problems. This suggests that for more complicated cost functions, policy classes, and dynamics, Nash equilibria may often be avoided by policy-gradient. 

We remark that each point in Figure~\ref{fig:lqr} represents the number of counterexamples found (out of $1000$) for each parameter value, meaning that for $r\approx0.35, b=0,$ and $q=0.01$ we were able to consistently generate around $250$ different examples of games where policy-gradient almost surely avoids the only stationary point of the dynamics.

 Note also that we were unable to find any counterexamples when  $b$ was varied in $(-0.5,0.5)$ and $q=0.01$, $r=0.1$. This suggests that depending on the structure of the dynamical system it may be possible to give stronger convergence guarantees.

\paragraph{Convergence to Cycles.} Figures~\ref{fig:path}--\ref{fig:params2} show the payoffs and parameter values of the two players when they use policy-gradient in two general-sum LQ games we identified as being counterexamples for convergence to the Nash equilibrium. 

In the two games, we initialize both players in a ball of radius $0.25$ around their Nash equilibrium strategies and let them perform policy-gradient with step size $0.05$. We observe that in both games the players diverge from the Nash equilibrium and converge to limit cycles.

For the two games in Figures~\ref{fig:path}--\ref{fig:avg}, the game parameters are such that $b=0$, $r=0.01$, and  $q=0.147$.
The two $A$ matrices are defined as follows:
\begin{align}
\text{(i):} \ \ &A=\bmat{0.588 & 0.028 \\ 0.570 & 0.056}, \quad
    \text{(ii):} \ \ A=\bmat{0.511 & 0.064 \\ 0.533 & 0.993}.  
    \label{eq:paramsAii}
\end{align}

We also chose the initial state distribution to be $[1,1]^T$ or $[1,1.1]^T$ with probability 0.5 each.


\begin{figure*}[h]
\center    
      \includegraphics[width=\textwidth]{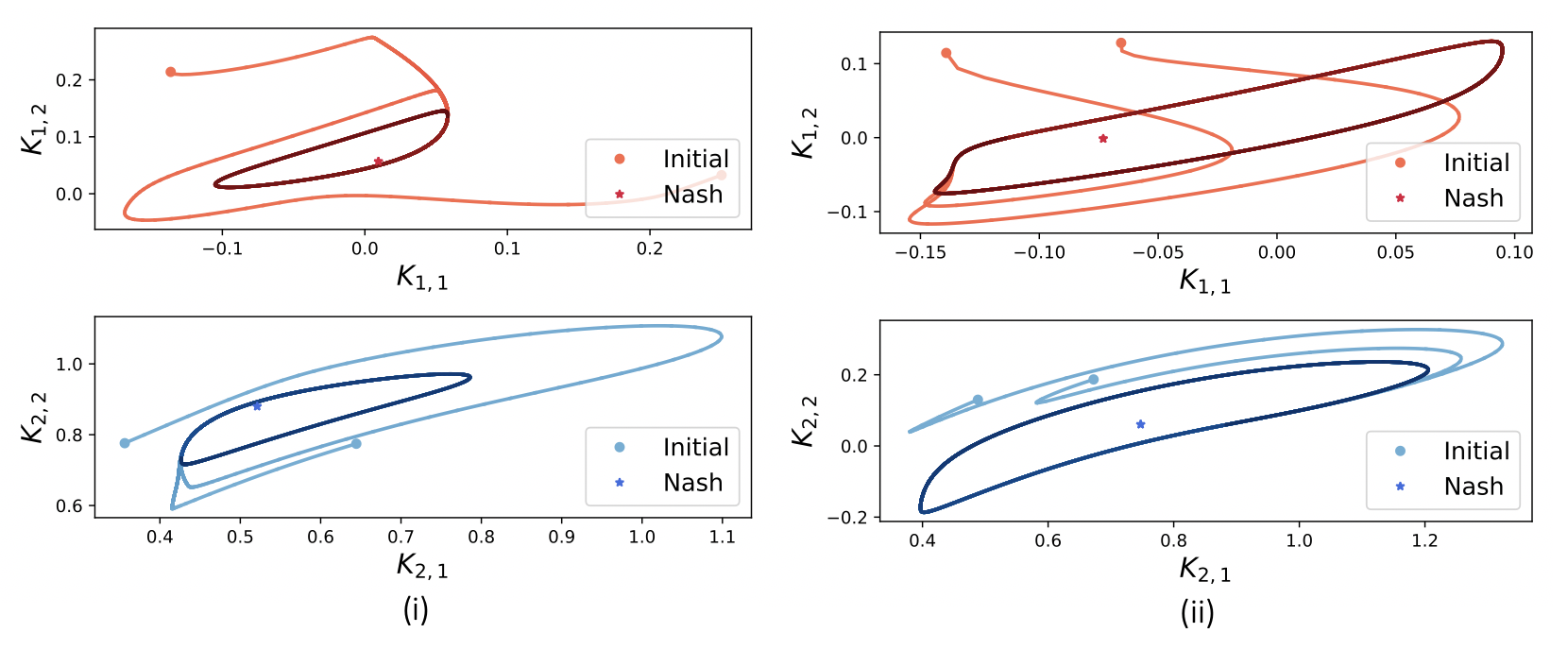}
        \caption{Parameter values of the two players in two general-sum LQ game where the Nash equilibrium is avoided by the gradient dynamics. We empirically observe in both games described in \eqref{eq:paramsAii} that players converge to the same cycle from different initializations. Time is shown by the progressive darkening of the players' strategies.}
        \label{fig:params2}
\end{figure*}

The eigenvalues of the corresponding game Jacobian $D\omega$ evaluated at the Nash equilibrium are as follows:
\begin{align*}
    \text{(i):} \ \ &\spec(D\omega(K_1^*,K_2^*))=\{10.88  ,  2.02  , -0.21 , -0.06 \} \\ 
    \text{(ii):} \ \ &\spec(D\omega(K_1^*,K_2^*))= \{9.76, 0.54,-0.01+0.08j, -0.01-0.08j\}.
\end{align*}
Thus, these games do satisfy the conditions of Corollary~\ref{corr:almostsure} for the avoidance of Nash equilibria.
We conclude this section by noting that, as shown in Figure~\ref{fig:avg}, the players' average payoffs do not necessarily converge to the Nash equilibrium payoffs. 

\begin{figure*}[t]
\center    
      \includegraphics[width=1.0\linewidth]{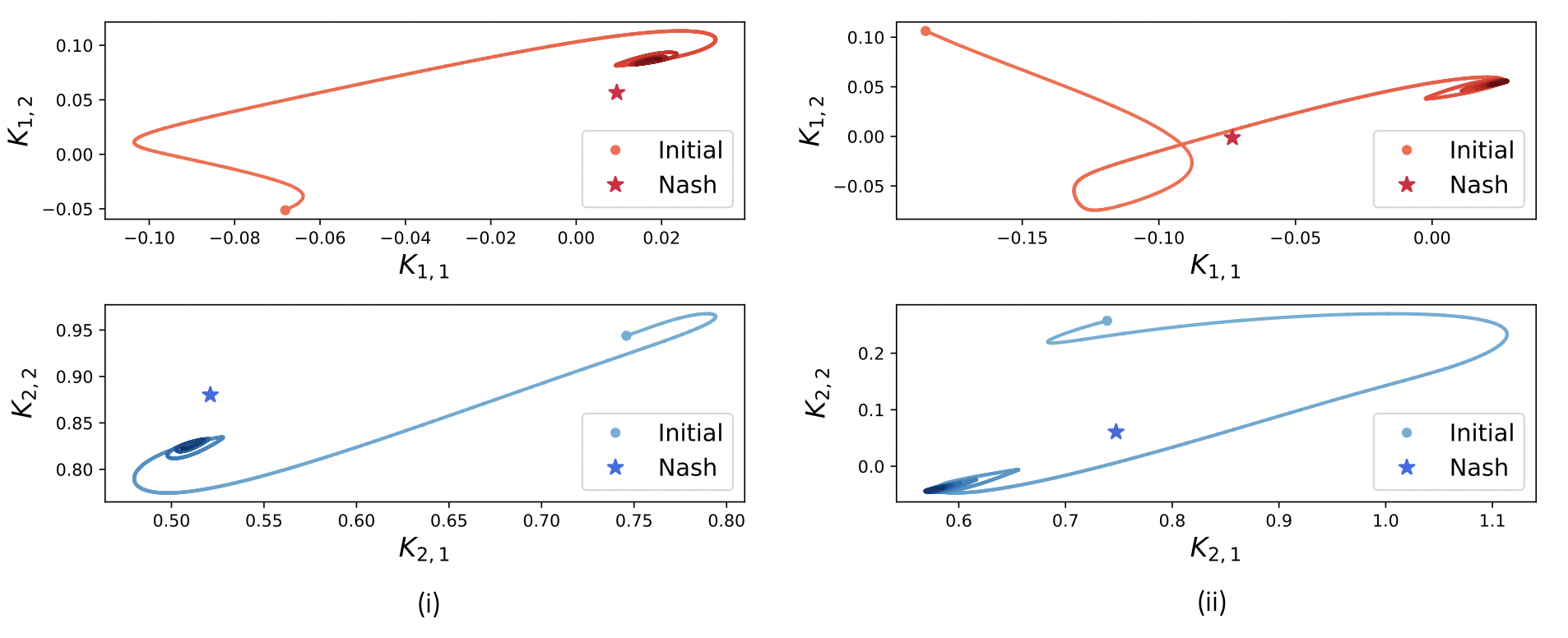}
        \caption{Time average parameter values of the two players in the general-sum LQ game with dynamics given in \eqref{eq:paramsAii}. We empirically observe that in both games the players' time average strategy does not converge to the Nash equilibrium strategy. Time is shown by progressive darkening of the players' strategies. }
        \label{fig:avg}
\end{figure*}

\section{Discussion}
\label{discussion}

We have shown that in the relatively straightforward setting of $\nplayers$--player LQ games, agents performing policy-gradient have no guarantees of local, and therefore global, convergence to the Nash equilibria of the game even if they randomly initialize their first policies in a small neighborhood of the Nash equilibrium. Since we also showed that the Nash equilibria are the only critical points of the gradient dynamics, this means that, for this class of games, policy-gradient algorithms may have no guarantees of convergence to \emph{any} set of stationary policies. 

Since linear dynamics, quadratic costs, and linear policies are a relatively simple setup compared to many recent deep multi-agent reinforcement learning problems \citep{openAI_MA_policygrad,deepmind_CaptureFlag}, we believe that the issues of non-convergence are likely to be present in more complex scenarios involving more complex dynamics and parametrizations of the policies. This can be viewed as a cautionary note, but it also suggests that the algorithms that have yielded impressive results in multi-agent settings can be further improved by leveraging the underlying game-theoretic structure.

We remark that we only analyzed the deterministic policy-gradient setting, though the findings extend to settings in which players construct unbiased estimates of their gradients \citep{sutton:2017aa} and even actor-critic methods \citep{deepmind_MAActorCritic}. Indeed all of these algorithms will suffer the same problems since they all seek to track the same limiting continuous-time dynamical system \citep{paper:old}. 

Our numerical experiments also highlight the existence of limit cycles in the policy-gradient dynamics. Unlike in classical optimization settings in which oscillations are normally caused by the choice of step sizes, the cycles we highlight are behaviors that can occur even with arbitrarily small step sizes. They are a fundamental feature of learning in multi-agent settings and have been observed in the dynamics of many learning algorithms \citep{paper:old,PapaGameDyn,cycleHommes,mertCycle}. We remark, however, that there is no obvious link between the limit cycles that arise in the gradient dynamics of the LQ games and the Nash equilibrium of the game. Indeed, unlike with other game dynamics in more simple games, such as the well-studied replicator dynamics in  bilinear games \citep{mertCycle}  or multiplicative weights in rock-paper-scissors \citep{cycleHommes}, the time average of the players' strategies does not coincide with the Nash equilibrium. This may be due to the fact that the Nash equilibrium is a saddle point of the gradient dynamics and not simply marginally stable, though the issue warrants further investigation. 

This paper highlights how algorithms developed for classical optimization or single-agent optimal control settings may not behave as expected in multi-agent and competitive environments. Algorithms and approaches that have provable convergence guarantees and performance in competitive settings, while retaining the scalability and ease of implementation of simple policy-gradient methods, are therefore a crucial and promising open area of research.


\appendix
\section{Proofs of Auxiliary Results}
\label{app:proofs}
{\setcounter{theorem}{\value{smoothpropnum}}
\begin{proposition}
Consider an $\nplayers$--player LQ game. The vector-valued map $\omega$ twice continuously differentiable on $\mc{S}^{md}$; i.e., $\omega\in C^2(\mc{S}^{md}, \mc{S}^{md})$.
\end{proposition}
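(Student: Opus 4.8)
The plan is to exhibit $\omega$ as a finite combination of elementary smooth maps together with two implicitly-defined objects, $P_i$ and $\SigmaK{K}$, and to show each of the latter depends smoothly—indeed real-analytically—on $K=(\nplayerstrategy)$ on the open set $\mc{S}^{md}$ of stabilizing policies. First note $\mc{S}^{md}$ is open: Schur stability of $\Aclbar=A-\sum_i B_iK_i$ is an open condition on $\spec(\Aclbar)$, and $K\mapsto\Aclbar$ is affine, hence $C^\infty$. So differentiability of $\omega$ on $\mc{S}^{md}$ is a well-posed question.

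The key lemma is that for any Schur-stable $M$ the discrete Lyapunov (Stein) operator $\mc{L}_M\colon X\mapsto X-M^TXM$, acting on symmetric matrices, is invertible, since its eigenvalues are $1-\lambda_j\lambda_k$ over $\lambda_j,\lambda_k\in\spec(M)$, all nonzero when $\rho(M)<1$; equivalently $\mc{L}_M^{-1}(N)=\sum_{k\ge0}(M^T)^kNM^k$, a series converging locally uniformly in $(M,N)$ on the stable set. Applying the implicit function theorem~\citep{marsden:1988aa} to the Bellman equation~\eqref{eq:Bellman}—whose partial derivative with respect to $P_i$ is precisely $\mc{L}_{\Aclbar}$, invertible on $\mc{S}^{md}$—shows $K\mapsto P_i(K)$ is $C^\infty$ (in fact analytic) on $\mc{S}^{md}$ for each $i$. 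The identical argument applied to the Lyapunov equation $\SigmaK{K}=\Sigma_0+\Aclbar\,\SigmaK{K}\,\Aclbar^T$ (which $\SigmaK{K}$ satisfies by summing the dynamics~\eqref{eq:updateLQR}) gives that $K\mapsto\SigmaK{K}$ is $C^\infty$ on $\mc{S}^{md}$.

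Finally, $D_if_i(\nplayerstrategy)=2(R_iK_i-B_i^TP_i\Aclbar)\SigmaK{K}$ is a finite sum of products of the maps $K\mapsto K_i$, $K\mapsto\Aclbar$, $K\mapsto P_i$, $K\mapsto\SigmaK{K}$ and constant matrices, all $C^\infty$ on $\mc{S}^{md}$; hence each $D_if_i$, and therefore $\omega$, is $C^\infty$—in particular $C^2$—on $\mc{S}^{md}$, as claimed.

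\textbf{Main obstacle.} The one nontrivial point is justifying ``$P_i$ and $\SigmaK{K}$ depend smoothly on $K$'': the slick route is the implicit function theorem, which reduces exactly to the invertibility of the Stein operator $\mc{L}_{\Aclbar}$ on the stabilizing set, i.e.\ to the spectral computation above; a more hands-on alternative is to differentiate the Neumann series term-by-term and bound the tails uniformly on compact subsets of $\mc{S}^{md}$, which also yields analyticity. One should also note that Assumption~\ref{ass:coststatematrix} is what pins down the \emph{positive-definite} solution singled out in~\eqref{eq:Bellman}, but the regularity claim itself uses only Schur stability of $\Aclbar$; this is why the same technique from Lemma~6.5 of \cite{basarLQ} carries over verbatim from the two-player zero-sum case to the $\nplayers$-player general-sum case.
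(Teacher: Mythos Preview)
Your proposal is correct and follows essentially the same route as the paper: apply the implicit function theorem to the Lyapunov/Bellman equations for $\SigmaK{K}$ and $P_i$, using that the linearized (Stein) operator is invertible precisely when $\Aclbar$ is Schur stable, and then compose. The paper phrases the invertibility via the Kronecker form $[\Aclbar\otimes\Aclbar]-I$ rather than your eigenvalue computation $1-\lambda_j\lambda_k$, and only writes out the $C^1$ conclusion explicitly (noting in the surrounding text that stronger regularity follows), whereas you go directly to $C^\infty$; but these are cosmetic differences, not substantive ones.
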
}
\begin{proof} 
 Following the proof technique of \cite{basarLQ}, we show the regularity of $\omega$ using the implicit function theorem~\citep{marsden:1988aa}. In particular, we show that $\SigmaK{K}=\mb{E}_{z_0\sim\Do}\left[\sum_{t=0}^{\infty}z(t)z(t)^T\right]$ and $P_i$ for $i\in\{1, \ldots, \nplayers\}$ are $C^1$ with respect to each $K_i$ on the space of stabilizing matrices.
 
For any stabilizing $(\nplayerstrategy)$, $\SigmaK{K}$ is the unique solution to the following discrete-time Lyapunov equation:
 \begin{equation}\Aclbar\SigmaK{K}\Aclbar^T+\Sigma_0=\SigmaK{K}, \label{eq:sigmaeq}\end{equation}
 where $\Sigma_0=\mb{E}_{z_0\sim\Do}[z(0)z(0)^T]>0$ and $\Aclbar=A-\sum_{i=1}^{\nplayers}B_iK_i$. Both sides of this expression can be vectorized. Indeed, using the same notation as in \cite{basarLQ}, let $\vect(\cdot)$ be the map that vectorizes its argument and let $\Psi:\mb{R}^{m^2}\times\mb{R}^{d_1\times m}\times\cdots\times\mb{R}^{d_\nplayers\times m}\rar \mb{R}^{m^2}$ be defined by
 \[\Psi(\vect(\SigmaK{K}), \nplayerstrategy)=\left[\Aclbar \otimes \Aclbar \right]\cdot \vect(\SigmaK{K})+\vect(\Sigma_0).\]
 Then, \eqref{eq:sigmaeq} can be written as
\begin{align*}
    F(\vect(\SigmaK{K}),\nplayerstrategy)&=\Psi(\vect(\SigmaK{K}),\nplayerstrategy)-\vect(\SigmaK{K})\\
    &=0.
\end{align*}
 The map $F$ implicitly defines $\SigmaK{K}$. Moreover, letting $I$ denote the appropriately sized identity matrix, we have that
 \[\frac{\partial F(\vect(\SigmaK{K}),\nplayerstrategy)}{\partial \vect^T(\SigmaK{K})}=\left[\Aclbar \otimes \Aclbar\right]-I.\]
For stabilizing $(\nplayerstrategy)$, this matrix is an isomorphism since $\spec(\Aclbar)$ is inside the unit circle. Thus, using the implicit function theorem, we conclude that $\vect(\SigmaK{K})\in C^1$. As noted in \cite{basarLQ}, the proof for each $P_i$, $i\in \{1, \ldots, \nplayers\}$ is completely analogous. Since $\SigmaK{K}$ and $P_i$ are $C^1$ and $\omega$ is linear in these terms, the result of the proposition follows. 
\end{proof}

\section{Additional Mathematical Preliminaries and Results}
The following theorem is the celebrated center manifold theorem from geometry. We utilize it in showing avoidance of saddle point equilibria of the dynamics.  
\label{app:prelims}
\setcounter{theorem}{\value{totalcntnum}}
\begin{theorem}[{Stable Manifold Theorem~\cite[Thm.~III.7]{shub:1978aa},
\cite{smale:1967aa}}]
    Let $x_0$ be a fixed point for the $C^r$ local diffeomorphism $\phi:U\rar
    \mb{R}^n$ where $U\subset \mb{R}^n$ is an open neighborhood of $x_0$ in
    $\mb{R}^n$ and $r\geq 1$. Let $E^s\oplus E^c\oplus E^u$ be the invariant
    splitting of $\mb{R}^n$ into generalized eigenspaces of $D\phi(x_0)$
    corresponding to eigenvalues of absolute value less than one, equal to one,
    and greater than one. To the $D\phi(x_0)$ invariant subspace $E^s\oplus E^c$
    there is an associated local $\phi$--invariant $C^r$ embedded disc
    $W_{\text{loc}}^{cs}$ called the local stable center manifold of dimension $\dim(E^s\oplus E^c)$ and ball $B$ around
    $x_0$ such that
    $\phi(W_{\text{loc}}^{cs})\cap B\subset W_{\text{loc}}^{cs}$, and if $\phi^n(x)\in
    B$ for all $n\geq 0$, then $x\in W_{\text{loc}}^{sc}$.
    \label{thm:centerstable}
\end{theorem}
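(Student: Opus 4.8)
This is the classical center-stable manifold theorem, and in the write-up I would simply invoke \cite[Thm.~III.7]{shub:1978aa}; but if I had to prove it, the plan is to run the Perron (``backward iteration'') construction. First I would translate $x_0$ to the origin and write $\phi(x)=Ax+g(x)$ with $A=D\phi(x_0)$, $g\in C^r$, $g(0)=0$ and $Dg(0)=0$. Using a $C^\infty$ bump function I would replace $g$ by $\tilde g$ that agrees with $g$ on a small ball $B$ around $0$ and has globally small Lipschitz constant $\varepsilon$; the modified diffeomorphism $\tilde\phi$ has identical dynamics inside $B$, and since every assertion in the theorem is local, it suffices to prove it for $\tilde\phi$.

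Next I would split $\mb{R}^n=E^{cs}\oplus E^u$ with $E^{cs}=E^s\oplus E^c$, block-diagonalize $A$, and choose an adapted norm so that $\|A_{cs}\|\le\lambda$ and $\|A_u^{-1}\|\le\mu$, where $A_{cs}=A|_{E^{cs}}$, $A_u=A|_{E^u}$, $\mu<1$, and $\lambda$ is any constant exceeding the maximal modulus of the $E^{cs}$-eigenvalues (so $\lambda$ may be taken just above $1$); the crucial point is the spectral gap $\mu\lambda<1$, which is available even though $E^c$ contributes eigenvalues of modulus exactly one. I would then characterize $W_{\mathrm{loc}}^{cs}$ as the set of points whose forward $\tilde\phi$-orbit stays bounded. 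For such an orbit $(x_n)_{n\ge0}$, writing $x_n=(v_n,w_n)\in E^{cs}\times E^u$, the unstable component telescopes to $w_n=-\sum_{k\ge0}A_u^{-(k+1)}g_u(x_{n+k})$ (the boundary term at infinity vanishes since $\mu<1$ and the orbit is bounded), while $v_{n+1}=A_{cs}v_n+g_{cs}(x_n)$. Feeding these two relations into one operator on the Banach space of bounded sequences, parametrized by $v_0$, gives a uniform contraction for $\varepsilon$ small; its unique fixed point is the unique bounded orbit with prescribed $v_0$, and $h(v_0):=w_0$ of that orbit defines $W_{\mathrm{loc}}^{cs}=\{(v,h(v))\}$ as a graph over $E^{cs}$, hence an embedded disc of dimension $\dim(E^s\oplus E^c)$, with $h(0)=0$ and $Dh(0)=0$ since $Dg(0)=0$. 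Local invariance $\phi(W_{\mathrm{loc}}^{cs})\cap B\subset W_{\mathrm{loc}}^{cs}$ and the ``trapped orbit'' characterization then drop straight out of the boundedness description and transfer back to $\phi$ on $B$.

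The main obstacle is upgrading $h$ from Lipschitz to $C^r$: the contraction lives on a sup-norm space of sequences and is only a priori Lipschitz in the parameter $v_0$. The standard remedy is the fiber contraction theorem (Hirsch–Pugh–Shub): one writes the formally differentiated fixed-point equation, checks it is again a fiber-wise contraction over the base contraction, deduces $C^1$ dependence, and iterates $r$ times. One must check that the center directions do not spoil the estimates, and they do not, because in each estimate the decaying factor $\mu^k$ from $E^u$ is multiplied against factors growing at most like $\lambda^k$ from $E^{cs}$, and $\mu\lambda<1$ keeps the geometric series summable — it is only the pure center manifold $W^c$, graphed over $E^c$ alone, that suffers loss of smoothness and non-uniqueness, not $W^{cs}$. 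For the paper I would not reproduce any of this and would just cite Shub, whose hypotheses ($\phi$ a $C^r$ local diffeomorphism, $r\ge1$) match ours exactly.
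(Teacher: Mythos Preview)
Your instinct is correct and matches the paper exactly: the paper does not prove this theorem at all. It is stated in the appendix purely as a cited classical result from \cite[Thm.~III.7]{shub:1978aa} and \cite{smale:1967aa}, with no accompanying argument, and is then invoked as a black box in the proof of Theorem~\ref{thm:lqgame}. Your opening sentence --- simply invoke Shub --- is precisely what the paper does; the Perron/graph-transform sketch you give afterward is a standard and correct outline, but it is strictly more than the paper provides.
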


\bibliography{JMLR}

\end{document}